\newcommand{\Comments}{1}
\newcommand{\mynote}[2]{\ifnum\Comments=1\textcolor{#1}{#2}\fi}
\newcommand{\mytodo}[2]{\ifnum\Comments=1%
  \todo[linecolor=#1!80!black,backgroundcolor=#1,bordercolor=#1!80!black]{#2}\fi}
\pgfplotsset{compat=1.16}
\theoremstyle{definition}
\newcommand{\density}[1]{d_{#1}}
\newcommand{\budget}{B}
\newcommand{\truerank}{\texttt{rank}}
\newcommand{\observedreward}{\texttt{reward}}
\newcommand{\truerewardvec}{\mu}
\newcommand{\truereward}[1]{\mu_{#1}}
\newcommand{\approxreward}[1]{\hat{\mu}_{#1}}
\newcommand{\approxrewardij}[2]{\hat{\mu}_t({#1}, {#2})}
\renewcommand{\epsilon}{\varepsilon}
\newcommand{\dist}{\mathcal{P}}
\newcommand{\tunepriority}{\lambda}
\newcommand{\effortvec}{\vec{\beta}} 
\newcommand{\opteffortvec}{\vec{\beta}^\star} 
\newcommand{\opteffort}[1]{\beta_{#1}^\star} 
\newcommand{\effort}[1]{\beta_{#1}} 
\newcommand{\discretization}{\Psi} 
\newcommand{\discretizationlevel}[1]{\psi_{#1}} 
\newcommand{\selfucb}{\textsc{selfUCB}} 
\newcommand{\ucb}{\textsc{UCB}} 
\newcommand{\obj}{\texttt{obj}(\effortvec)} 
\newcommand{\priority}[1]{\texttt{benefit}(\effortvec;#1)} 
\newif\ifshowplot
\title{Ranked Prioritization of Groups in Combinatorial Bandit Allocation}
\author{
Lily Xu$^1$\and
Arpita Biswas$^1$\and
Fei Fang$^{2}$\And
Milind Tambe$^1$\\
\affiliations
$^1$Center for Research on Computation and Society, Harvard University\\
$^2$Institute for Software Research, Carnegie Mellon University\\
\emails
lily\_xu@g.harvard.edu, arpitabiswas@seas.harvard.edu, feif@cs.cmu.edu, milind\_tambe@harvard.edu
}
\begin{document}

\maketitle

\begin{abstract}
Preventing poaching through ranger patrols protects endangered wildlife, directly contributing to the UN Sustainable Development Goal 15 of life on land. 
Combinatorial bandits have been used to allocate limited patrol resources, 
but existing approaches overlook the fact that each location is home to multiple species in varying proportions, so a patrol benefits each species to differing degrees. 
When some species are more vulnerable, we ought to offer more protection to these animals; unfortunately, existing combinatorial bandit approaches do not offer a way to prioritize important species. 
To bridge this gap, (1)~We propose a novel combinatorial bandit objective that trades off between reward maximization and also accounts for prioritization over species, which we call \emph{ranked prioritization}. We show this objective can be expressed as a weighted linear sum of Lipschitz-continuous reward functions. 
(2)~We provide RankedCUCB,\footnote{Code is available at \url{https://github.com/lily-x/rankedCUCB}} an algorithm to select combinatorial actions that optimize our prioritization-based objective, and prove that it achieves asymptotic no-regret. 
(3)~We demonstrate empirically that RankedCUCB leads to up to 38\% improvement in outcomes for endangered species using real-world wildlife conservation data. Along with adapting to other challenges such as preventing illegal logging and overfishing, our no-regret algorithm addresses the general combinatorial bandit problem with a weighted linear objective. 
\end{abstract}

\section{Introduction}

More than a quarter of mammals assessed by the IUCN Red List are threatened with extinction \citep{gilbert2008quarter}. As part of the UN Sustainable Development Goals, Target~15.7 focuses on ending poaching, and Target~15.5 directs us to halt the loss of biodiversity and prevent the extinction of threatened species.
To efficiently allocate limited resources, multi-armed bandits (MABs), and in particular combinatorial bandits \citep{chen2016combinatorial,cesa2012combinatorial}, have been widely used for a variety of tasks \citep{bastani2020online,segal2018combining,misra2019dynamic} including ranger patrols to prevent poaching~\citep{xu2021dual}. In this poaching prevention setting, the patrol planner is tasked with repeatedly and efficiently allocating a limited number of patrol resources across different locations within the park \citep{plumptre2014efficiently,fang2016deploying,xu2021robust}.




\begin{figure}
\centering
 \includegraphics[bb=0 0 599 141, width=\linewidth]{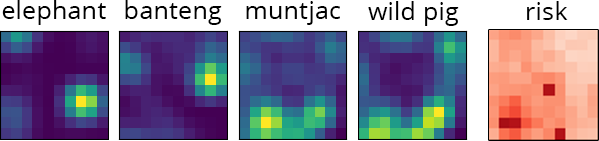}
\caption{Animal density distributions and poaching risk (expected number of snares) 
across a $10 \times 10$~km region in Srepok Wildlife Sanctuary, Cambodia. 
The distribution of the species of greatest conservation concern, the elephant, differs from that of other animals and overall poaching risk.}
\label{fig:animal-density}
\end{figure}

In past work, we worked with the World Wide Fund for Nature (WWF) to deploy machine learning approaches for patrol planning in Srepok Wildlife Sanctuary in Cambodia \citep{xu2020stay}. Subsequent conversations with park managers and conservation biologists raised the importance of focusing on conservation of \emph{vulnerable} species during patrols, revealing a key oversight in our past work. In this paper, we address these shortcomings to better align our algorithmic objectives with on-the-ground conservation priorities.

Looking at real-world animal distribution and snare locations in Srepok visualized in Figure~\ref{fig:animal-density}, we observe that the locations that maximize expected reward, defined as finding the most snares (darkest red in the risk map), are not necessarily the regions with high density of endangered animals (elephants). 
To effectively improve conservation outcomes, it is essential to account for these disparate impacts, as the relatively abundant muntjac and wildpig would benefit most if we simply patrol the regions with most snares, neglecting the endangered elephant. 
Prioritization of species is well-grounded in existing conservation best practices \citep{regan2008species,arponen2012prioritizing,dilkina2017trade}. The IUCN Red List of Threatened Species, which classifies species into nine groups from critically endangered to least concerned, is regarded as an important tool to focus conservation efforts on species with the greatest risk of extinction \citep{rodrigues2006value}. We term the goal of preferentially allocating resources to maximize benefit to priority groups as \textit{ranked prioritization}.
Some existing multi-armed bandit approaches have considered priority-based fairness \citep{joseph2016fairness,kearns2017meritocratic,schumann2022group}. However, unlike ours, these prior works only consider stochastic, not combinatorial, bandits. In our combinatorial bandit problem setup, we must determine some hours of ranger patrol to allocate across $N$ locations at each timestep, subject to a budget constraint, 
The rewards obtained from these actions are unknown \textit{a priori} so must be learned in an online manner. Existing combinatorial bandit approaches \citep{chen2016combinatorial,xu2021dual} achieve no-regret guarantees with respect to the objective of maximizing the direct sum of rewards obtained from all the arms. However, we wish to directly consider ranked prioritization of groups in our objective; learning combinatorial rewards in an online fashion while ensuring we justly prioritize vulnerable groups as well as reward makes the problem more challenging. \textit{How to trade off reward and prioritization in the combinatorial bandit setting} has so far remained an open question. We show experimentally that straightforward solutions fail to make the appropriate trade-off.
To improve wildlife conservation outcomes by addressing the need for species prioritization, we contribute to the literature on online learning for resource allocation across groups with the following:
\begin{enumerate}[leftmargin=*,topsep=3pt]
    \item We introduce a novel problem formulation that considers resource allocation across groups with ranked prioritization, which has significant implications for wildlife conservation, as well as illegal logging and overfishing. We show that an objective that considers both prioritization and reward can be expressed as a weighted linear sum, providing the useful result that the optimal action in hindsight can be found efficiently via a linear program. 
    \item We provide a no-regret learning algorithm, RankedCUCB, for the novel problem of selecting a combinatorial action (amount of patrol effort to allocate to each location) at each timestep to simultaneously attain high reward and group prioritization. We prove that RankedCUCB achieves sub-linear regret of $O(\frac{\ln{T}}{NT} +  \frac{NJ}{T} )$ for a setting with $N$~locations, $O(J^N)$ combinatorial actions, and time horizon~$T$. 
    \item Using real species density and poaching data from Srepok Wildlife Sanctuary in Cambodia along with synthetic data, we experimentally demonstrate that RankedCUCB achieves up to 38\% improvement over existing approaches in ranked prioritization for endangered wildlife. 

\end{enumerate}

\section{Problem formulation}
\label{sec:model}

Consider a protected area with $N$ locations and $G$ groups of interest, with each group representing a species or a set of species in the context of poaching prevention. Let $\density{gi}$ denote the (known) fraction of animals of a group $g\in [G]$ present in location $i \in [N]$. 
Note that $\sum_{i=1}^{N} \density{gi} = 1$ for all $g \in [G]$. We assume the groups are disjoint, i.e., each animal is a member of exactly one group, and the impact of an action on a location equally benefits all animals that are present there. 

At each timestep $t$ until the horizon~$T$, the planner determines an action, which is an \emph{effort vector} $\effortvec^{(t)} = (\effort{1}, \ldots, \effort{N})$ specifying the amount of effort (e.g., number of patrol hours) to allocate to each location. 
The total effort at each timestep is constrained by a budget~$\budget$ such that $\sum_{i=1}^{N} \effort{i} \leq \budget$. To enable implementation in practice, we assume the effort is discretized into $J$~levels, thus $\effort{i} \in \discretization=\{\discretizationlevel{1}, \ldots, \discretizationlevel{J}\}$ for all~$i$.

After taking action~$\effortvec$, the decision maker receives some reward $\truerewardvec(\effortvec)$. We assume the reward is decomposable \citep{chen2016combinatorial}, defined as the sum of the reward at each location:
\begin{align}
    \textit{expected reward} = \truerewardvec(\effortvec) = \sum_{i=1}^{N} \truereward{i}(\effort{i}) \ \label{eq:expected_reward}.
\end{align}
We assume $\truereward{i} \in [0, 1]$ for all~$i$.
In the poaching context, the reward $\truereward{i}$ corresponds to the true probability of detecting snares at a location~$i$. 
The reward function is initially unknown, leading to a learning problem. 
Following the model of combinatorial bandit allocation from \citet{xu2021dual}, we assume that the reward function $\truereward{i}(\cdot)$ is (1)~Lipschitz continuous, i.e., $|\truereward{i}(\discretizationlevel{j}) - \truereward{i}(\discretizationlevel{k})| \leq L \cdot |\discretizationlevel{j} - \discretizationlevel{k}|$ for some Lipschitz constant $L$ and all pairs $\discretizationlevel{j}, \discretizationlevel{k}$, and (2)~monotonically nondecreasing such that exerting greater effort never results in a decrease in reward, i.e., $\effort{i}^\prime > \effort{i}$ implies $\truereward{i}(\effort{i}^\prime) \geq \truereward{i}(\effort{i})$. 




Finally, we assume that an ordinal ranking is known to indicate which group is more vulnerable. The ranking informs the planner which groups are of relatively greater concern, as we often lack sufficient data to quantify the extent to which one group should be prioritized. Without loss of generality, assume the groups are numerically ordered with $g=1$ being the most vulnerable group and $g=G$ being the least, so the true rank is $\truerank = \langle 1, \ldots, G \rangle$.

\subsection{Measuring ranked prioritization}
\label{sec:compare-rank}
To evaluate prioritization, we must measure how closely the outcome of a combinatorial  action~$\effortvec$ aligns with the priority ranking over groups. %
Later in Section~\ref{sec:calibrated-fairness}, we discuss how our approach can generalize to other prioritization definitions, such as the case where have greater specificity over the relative prioritization of each group.



To formalize prioritization, we define $\priority{g}$ which quantifies the benefit group~$g$ receives from action~$\effortvec$. As mentioned earlier, we assume that reward $\truereward{i}(\effort{i})$ obtained by taking an action $\effort{i}$ impacts all individuals at location~$i$. Let $\eta_{gi}$ be the number of individuals from group~$g$ at location~$i$ and $\eta_g$ be the total number of individuals in group~$g$. We define
\begin{align}
    \priority{g} :=& \frac{\sum\limits_{i=1}^{N} \eta_{gi}\ \truereward{i}(\effort{i})}{\eta_g}\nonumber \\
    \triangleq& \sum_{i=1}^{N}  d_{gi}\ \truereward{i}(\effort{i}) \ .
    \label{eq:benefit}
\end{align}
An action $\effortvec$ perfectly follows ranked prioritization when
\begin{align*}
    \priority{1} \geq \priority{2} \geq \cdots \geq \priority{G} \ .
\end{align*}
However, when $\effortvec$ does not follow perfect rank prioritization, we need a metric to measure the extent to which groups are appropriately prioritized. This metric would quantify similarity between the ranking induced by the benefits of $\effortvec$ and the true $\truerank$. One common approach to evaluate the similarity between rankings is the \emph{Kendall tau} coefficient:
\begin{align}
    \frac{(\text{\# concordant pairs}) - (\text{\# discordant pairs})}{\binom{G}{2}} \ .
\end{align}
The number of concordant pairs is the number of pairwise rankings that agree, across all $\binom{G}{2}$ pairwise rankings; the discordant pairs are those that disagree. This metric yields a value $[-1, 1]$ which signals amount of agreement, where $+1$ is perfect agreement 
and $-1$ is complete disagreement (i.e., reverse ordering). However, a critical weakness of Kendall tau is that it is discontinuous, abruptly jumping in value when the effort~$\effort{i}$ on an arm is perturbed,
rendering optimization difficult. 

Fortunately, we have at our disposal not just each pairwise ranking, but also the \emph{magnitude} by which each pair is in concordance or discordance. Leveraging this information, we construct a convex function $\dist(\effortvec)$ that quantifies group prioritization:
\begin{align}
\label{eq:fairness-unsimplified}
\resizebox{.9\linewidth}{!}{
    $\dist(\effortvec) = \frac{\sum\limits_{g, h \in [G]} \mathds{1}(g < h) \cdot \left( \priority{g} - \priority{h} \right) }{\binom{G}{2}}$
}
\end{align}
where $\mathds{1}( g < h )=1$ if $g$ is of higher priority than $h$ according to the true $\truerank$, and $0$ otherwise. When $g < h$, the difference in benefit to group~$g$ and~$h$ is positive if the pairs are concordant and negative if the pairs are discordant. Intuitively, for each concordant pair, we earn $(\priority{g} - \priority{h})$, and for each discordant pair, we are penalized this quantity. 
The value~$\dist(\effortvec)$ can be interpreted as the correlation between the priority outcomes of $\effortvec$ and the true priority ranking, giving us a metric for evaluating the group prioritization of an action $\effortvec$.

Our prioritization metric $\dist(\effortvec)$ induces nice properties. 
First, this metric is robust to perturbations in $\truereward{i}(\effort{i})$ values whereas Kendall tau is not.
As $\dist(\cdot)$ is continuous and differentiable, we can use it directly in the objective function. Second, with slight modifications to $\dist$ we can seamlessly adapt to other useful settings, as we discuss in Section~\ref{sec:calibrated-fairness}. 
Leveraging the fixed ordering $\truerank = \langle 1, \ldots, G  \rangle$ over groups, we can simplify Equation~\eqref{eq:fairness-unsimplified} as
\begin{align}
    \dist(\effortvec) &= \frac{\sum\limits_{g=1}^{G-1} \sum\limits_{h=g+1}^{G} \left( \priority{g} - \priority{h} \right) }{\binom{G}{2}} 
    \nonumber \\
    &\triangleq \frac{\sum\limits_{g=1}^{G-1} \sum\limits_{h=g+1}^{G} \left(\sum\limits_{i=1}^{N}\truereward{i}(\effort{i}) d_{gi}  - \sum\limits_{i=1}^{N} \truereward{i}(\effort{i}) d_{hi}\right) }{\binom{G}{2}}\nonumber\\
    &=\sum_{i=1}^N \truereward{i}(\effort{i}) \frac{\sum\limits_{g=1}^{G-1} \sum\limits_{h=g+1}^{G} \left(d_{gi}  -  d_{hi}\right) }{\binom{G}{2}} \ . \label{eq:fairness-concrete}
\end{align} 

\subsection{Objective with prioritization and reward}

We wish to take actions that maximize our expected reward (Eq.~\ref{eq:expected_reward}) while also distributing our effort across the various groups as effectively as possible (Eq.~\ref{eq:fairness-concrete}). Recognizing that targeting group prioritization requires us to sacrifice reward, we set up the objective to balance reward and prioritization with a parameter $\tunepriority \in (0, 1]$ that enables us to tune the degree to which we emphasize reward vs.\ prioritization:
\begin{align}
    \label{eq:obj-abstract}
    \obj = \tunepriority 
    \truerewardvec(\effortvec)
    + (1 - \tunepriority) \cdot \dist(\effortvec) \ .
\end{align}



\section{Approach}
We show that the objective (Eq.~\ref{eq:obj-abstract}) can be reformulated as a weighted linear combination of the reward of each individual arm, which we then solve using our RankedCUCB algorithm, producing a general-form solution for combinatorial bandits. 

Using the prioritization metric from Equation~\eqref{eq:fairness-concrete}, we can re-express the objective~\eqref{eq:obj-abstract} as
\begin{align}
\resizebox{.91\linewidth}{!}{
    $\obj = \sum\limits_{i=1}^{N} \truereward{i}(\effort{i}) \left( \tunepriority 
    + (1 - \tunepriority) 
    \frac{
    \sum\limits_{g=1}^{G-1} \sum\limits_{h=g+1}^{G} 
    ( \density{gi} - \density{hi} ) 
    } 
    { \binom{G}{2} } \right)$
}
\end{align}
Observe that the large parenthetical is comprised only of known constants, so we can precompute the parenthetical value as $\Gamma_i$ for each location~$i$ to yield
\begin{align}
\label{eq:obj-simple}
    \obj &= \sum_{i=1}^{N} \truereward{i}(\effort{i}) \Gamma_i \ ,
\end{align}
providing the useful result that our objective is decomposable across the locations $i \in [N]$ --- specifically that the objective is a linear combination of the reward for each component of the action. 


\subsection{Combinatorial UCB approach}
The expected reward functions~$\truereward{i}(\cdot)$ are unknown in advance and stochastic, so our setting falls within the space of sequential learning problems that need to be learned over multiple timesteps. Therefore, an action (an effort vector~$\effortvec$) we take in each timestep can be viewed as an arm pull under a multi-armed bandit framework. 
Specifically, we use a combinatorial bandit approach \citep{chen2016combinatorial}, where each \emph{arm} is an effort level~$\discretizationlevel{j}$ at one location~$i$, rendering $NJ$ total arms. We track the reward of each arm separately then use an oracle to select a combined \emph{action} $\effortvec$, which is a combinatorial selection of arms. 
Our goal is to design an algorithm that determines an action at each time~$t$ to minimize regret, defined as the difference between the total expected reward of an optimal action and the reward achieved following actions recommended by the algorithm. 

To balance exploration (learning the reward of actions with greater uncertainty) with exploitation (taking actions known to be high-reward), we employ an upper confidence bound (UCB) approach that considers for each action its estimated reward along with our uncertainty over that estimate.
Let $\approxreward{t}(i, j) = \observedreward_t(i, \discretizationlevel{j}) / n_t(i, \discretizationlevel{j})$ 
be the average observed reward of location~$i$ at effort~$\discretizationlevel{j}$ given cumulative empirical reward $\observedreward_t(i, \discretizationlevel{j})$ over $n_t(i, \discretizationlevel{j})$ arm pulls by timestep~$t$. 
The \emph{confidence radius}~$r_t$ of an arm $(i, j)$ is then a function of the number of times we have pulled that arm:
\begin{align}
    r_t(i,j) = \sqrt{\frac{3 \Gamma_i^2 \log T }{2 n_t(i, \discretizationlevel{j})}} \ .
\end{align}

We distinguish between UCB and a term we call $\selfucb$. The $\selfucb$ of an arm $(i, j)$ representing location~$i$ with effort~$\discretizationlevel{j}$ at time~$t$ is the UCB of an arm based only on its own observations, given by
\begin{align}
    \selfucb_t(i, j) = \Gamma_i \hat{\mu}_t(i, j) + r_t(i, j) \ .
\end{align}
This definition of $\selfucb$ corresponds with the standard interpretation of confidence bounds from the standard UCB1 algorithm \citep{auer2002finite}. The UCB of an arm is then computed by taking the minimum of the bounds of all $\selfucb$s as applied to the arm. These bounds are determined by adding the distance between arm $(i, j)$ and all other arms $(u, v)$ to the $\selfucb$:
\begin{align}
\label{eq:ucb}
\ucb_t(i, j) &= \min\limits_{\substack{k \in [J]}} \{ \selfucb_t(i, k) + L \cdot \textit{dist} \} \\ 
\textit{dist} &= \max\{0,  \discretizationlevel{j} - \discretizationlevel{k} \} 
\end{align}
which leverages the assumptions described in Section~\ref{sec:model} that the expected rewards are $L$-Lipschitz continuous and monotonicially nondecreasing.

Given these UCB estimates for each arm $(i,j)$, we must now select a combinatorial action $\effortvec$ to take at each timestep. 
As the prioritization metric can be expressed as a linear combination of the reward~\eqref{eq:obj-simple}, we can directly optimize the overall objective using an integer linear program $\mathcal{LP}$ (in Appendix~\ref{sec:lp}), which selects an optimal action that respects the budget constraint.




\subsection{Trading off prioritization and learning}

A key challenge with our prioritization metric is that it is defined with respect to the true expected reward functions $\truereward{i}$ which are initially unknown. Instead, we estimate per-round prioritization based on our current \emph{belief} of the true reward, $\approxreward{i}$, related to \emph{subjective fairness} from the algorithmic fairness literature \citep{dimitrakakis2017subjective}. However, this belief is nonsensical in early rounds when our reward estimates are extremely coarse, so we discount the weight of prioritization in our objective until our learning improves. 

Inspired by decayed epsilon-greedy, we incorporate an $\epsilon$ coefficient to tune how much we want to prioritize rank order at each step (versus learning to maximize reward):
\begin{align}
    \label{eq:obj-linear-epsilon}
    \obj = \tunepriority 
    \truerewardvec(\effortvec) + (1 - \tunepriority) (1 - \epsilon) \cdot \dist(\effortvec) \ .
\end{align}
For example, epsilon-greedy methods often use exploration probability $\epsilon_t \sim t^{-1/3}$, which gradually attenuates at a decreasing rate with each increased timestep. 
Our definition gives nice properties that $\epsilon_t = 1$ at $t = 1$, so we do not care about ranked prioritization at all (we have no estimate of the reward values so we cannot reasonably estimate ranking) but at $t = \infty$ then $\epsilon_t = 0$ and we fully care about ranking w.r.t.\ $\tunepriority$ (since we have full knowledge of the reward thus can precisely estimate rank order). 

All together, we call the approach described here RankedCUCB and provide pseudocode in Algorithm~\ref{alg-ranked-cucb}.


\begin{algorithm}[tb]
\caption{RankedCUCB}
\label{alg-ranked-cucb}
\raggedright
\textbf{Input}: time horizon~$T$, budget~$B$, discretization levels $\discretization = \{ \discretizationlevel{1}, \ldots, \discretizationlevel{J} \}$, arms~$i \in [N]$ with unknown reward~$\truereward{i}$ \\
\textbf{Parameters}: tuning parameter $\tunepriority$ \\ 
\begin{algorithmic}[1] 
\STATE Precompute $\Gamma_i$ for each arm $i \in [N]$
\STATE $n(i, \psi_j) = 0, \texttt{reward}(i, \psi_j) = 0 \quad \forall i \in [N], j \in [J]$
\FOR{timestep $t = 1, 2, \ldots, T$}
\STATE Let $\epsilon_t = t^{-1/3}$
\STATE Compute $\text{UCB}_t$ for all arms using Eq.~\eqref{eq:ucb}
\STATE Solve $\mathcal{LP}(\ucb_t, \{\Gamma_i\}, B)$ to select super arm~$\effortvec$
\STATE \textit{// Execute action}
\STATE Act on $\effortvec^\prime$ to observe rewards $X_1^{(t)}, X_2^{(t)}, \ldots, X_N^{(t)}$ \\
\FOR {arm $i = 1, \ldots, N$}
\STATE $\texttt{reward}(i, \effort{i}^\prime) = \texttt{reward}(i, \effort{i}^\prime) + X_i^{(t)}$  \\
\STATE $n(i, \effort{i}) = n(i, \effort{i}) + 1$
\ENDFOR
\ENDFOR
\end{algorithmic}
\end{algorithm}

\section{Regret analysis}
We now prove that our iterative algorithm RankedCUCB (Alg.~\ref{alg-ranked-cucb}) guarantees no regret with respect to the optimal solution for the objective~\eqref{eq:obj-simple} that jointly considers reward and prioritization. More formally, we show that $\text{Regret}_T\rightarrow 0$ as $T\rightarrow \infty$ where
\begin{align}
   \text{Regret}_T := \truerewardvec(\opteffortvec) - \frac{1}{T}\sum_{t=1}^T \truerewardvec(\effortvec^{(t)}) \ .  \label{eq:regret}
\end{align}
Here, $\opteffortvec$ is an optimal action and expected reward $\truerewardvec(\effortvec):= \sum_{i=1}^N \truereward{i}(\effort{i}) \Gamma_i$ for an effort vector $\effortvec = \{\effort{1}, \ldots, \effort{N} \}$. Note that if $\Gamma_i<0$, any solution to the maximization problem~\eqref{eq:obj-simple} would allocate $\beta_i=0$, and so would RankedCUCB. Hence, for the analysis we assume that we consider only those locations whose $\Gamma_i>0$.

\subsection{Convergence of estimates}
To prove the no-regret guarantee of RankedCUCB, we first establish Lemma~\ref{lemma:tau-turns} which states that, with high probability, the UCB estimate $\hat{\mu}_t({i,j}) \Gamma_i + r_t(i, j)$ converges within a confidence bound of $r_t(i,j)=\sqrt{ (3 \Gamma_i^2 \log T) / (2 n_t)}.$

\begin{restatable}[]{lemma}{tauTurns}
\label{lemma:tau-turns}
Using RankedCUCB, after $t$ timesteps, 
each $\hat{\truerewardvec}_t(i,j) \Gamma_i$ estimate converges to a value within a radius $r_t (i, j) = \sqrt{3\Gamma_i^2 \ln{t} / 2 n_t(i,j)}$ of the corresponding true $\truerewardvec_t(i,j)\Gamma_i$ values with probability $1 - \frac{2NJ}{t^2}$ for all $i, j$. 
\end{restatable}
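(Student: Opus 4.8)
The plan is to prove this as a Hoeffding-style concentration bound, with care taken for two non-routine points: the arm-selection rule is adaptive, and the number of pulls $n_t(i,j)$ is itself a random quantity determined by the algorithm's history.

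First I would set up the bounded rescaled observations. Since $\truereward{i}(\cdot)\in[0,1]$ and the observed reward $X_i^{(t)}$ at location $i$ has conditional mean $\truereward{i}(\effort{i}^{(t)})$ given the history, each rescaled observation $\Gamma_i X_i^{(t)}$ lies in $[0,\Gamma_i]$ (using the analysis-wide assumption $\Gamma_i>0$). Fix an arm $(i,j)$ and a \emph{deterministic} count $n\ge 1$; then the rescaled running average $\Gamma_i\hat{\mu}(i,j)$ over those $n$ pulls is an average of $n$ bounded terms, each with conditional mean $\Gamma_i\truereward{i}(\discretizationlevel{j})$. A martingale (Hoeffding--Azuma) form of Hoeffding's inequality — needed because the pull \emph{times} are chosen adaptively — gives
\[
\Pr\!\left[\, \bigl|\Gamma_i\hat{\mu}(i,j)-\Gamma_i\truereward{i}(\discretizationlevel{j})\bigr|\ge \rho \,\right]\;\le\; 2\exp\!\left(-\frac{2n\rho^2}{\Gamma_i^2}\right).
\]
Substituting $\rho=\sqrt{3\Gamma_i^2\ln t/(2n)}$ makes the right-hand side exactly $2t^{-3}$.

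The main obstacle is that the realized count $n_t(i,j)$ depends on the algorithm's past choices, so we cannot plug $n=n_t(i,j)$ into the display above directly. The standard remedy is a peeling union over the $t$ possible values: since $1\le n_t(i,j)\le t$, if $\Gamma_i\hat{\mu}_t(i,j)$ deviates from $\Gamma_i\truereward{i}(\discretizationlevel{j})$ by at least $r_t(i,j)$ then the length-$n$ running average must violate the bound above for $n=n_t(i,j)$, so a union over $n\in\{1,\dots,t\}$ bounds the per-arm failure probability by $t\cdot 2t^{-3}=2t^{-2}$. Finally, a union bound over all $NJ$ arms $(i,j)$, $i\in[N]$, $j\in[J]$, multiplies this by $NJ$, so with probability at least $1-2NJ/t^2$ every estimate $\Gamma_i\hat{\mu}_t(i,j)$ is within $r_t(i,j)=\sqrt{3\Gamma_i^2\ln t/(2n_t(i,j))}$ of $\Gamma_i\truereward{i}(\discretizationlevel{j})$, which is the claim. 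The only delicate steps are the adaptivity (handled by the martingale version of Hoeffding rather than the i.i.d.\ form) and the random pull count (handled by the $t$-fold peeling); the ``$3\ln t$'' inside the radius is precisely calibrated so that the $t$-fold and $NJ$-fold unions leave a $1/t^2$-scale failure probability, and the rest is routine substitution.
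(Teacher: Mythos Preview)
Your proof is correct and follows essentially the same route as the paper: Hoeffding on the $\Gamma_i$-scaled observations to obtain a $2/t^3$ per-arm, per-count failure probability, then a union over the $t$ possible pull counts and over the $NJ$ arms to reach $2NJ/t^2$. You are in fact more careful than the paper's appendix, which silently inserts the sum $\sum_{s=1}^t$ (your peeling step) without explaining that it handles the randomness of $n_t(i,j)$, and which invokes ``Chernoff--Hoeffding'' without flagging that the adaptive pull times require the martingale/Azuma form.
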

\begin{proof}[Proof sketch]
Using the Chernoff--Hoeffding bound, we show that, at any timestep $t$, the probability that the difference between $\mu_t(i,j)\Gamma_i$ and $\hat{\mu}_t(i,j)\Gamma_i$ is greater than $r_t(i,j)$, is at most $2/t^3$. We then use union bound to show that $\hat{\mu}_t(i,j)\Gamma_i$ converges to a value within radius $r_t(i,j)$ of $\mu_t(i,j)\Gamma_i$ with probability $1-\frac{2NJ}{t^2}$. The complete proof is given in Appendix~\ref{appendix:proof-convergence}.
\end{proof}

\begin{figure*}[ht]
\centering
  \input{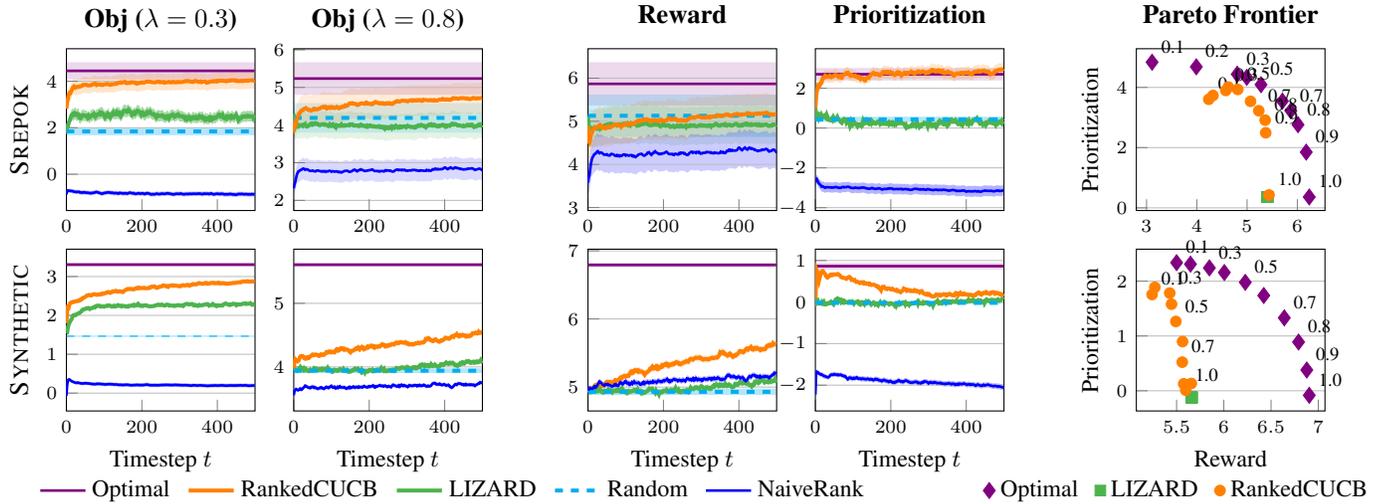}
  \caption{The performance of each approach. \textbf{LEFT} evaluates the objective with tuning parameter $\tunepriority = 0.3$ and $\tunepriority = 0.8$. Our approach, RankedCUCB, performs significantly better than baselines.  \textbf{CENTER} evaluates reward and prioritization (at $\tunepriority = 0.8$), the two components of the combined objective. The reward-maximizing LIZARD algorithm rapidly maximizes reward but performs worse than random in terms of rank order. 
  \textbf{RIGHT} visualizes the Pareto frontier trading off the two components of our objective. Labels represent different values of $\tunepriority \in \{0.1, 0.2, \ldots, 1.0\}$. Each point plots the reward and ranked prioritization as the average of the final 10 timesteps. All results are averaged over 30 random seeds and smoothed with an exponential moving average.}
    \label{fig:performance}
\end{figure*}

\subsection{Achieving no regret}

\begin{restatable}[]{theorem}{regretTheorem}
\label{theorem:regret}
The cumulative regret of RankedCUCB is $O \left( \frac{J \ln{T}}{N} + NJ \right)$ with $N$~arms, $J$~discrete effort values, and time horizon~$T$.
\end{restatable}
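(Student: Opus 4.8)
The plan is to adapt the standard gap-dependent regret analysis for combinatorial UCB (in the style of \citet{chen2016combinatorial}) to our weighted-linear objective $\texttt{obj}(\effortvec)=\sum_{i=1}^N\truereward{i}(\effort{i})\Gamma_i$, exploiting the decomposition~\eqref{eq:obj-simple}. First I would work on the ``clean event'' of Lemma~\ref{lemma:tau-turns}: at timestep $t$, every per-arm estimate $\Gamma_i\hat\mu_t(i,j)$ lies within $r_t(i,j)$ of $\Gamma_i\truereward{i}(\discretizationlevel{j})$. Since this fails with probability at most $2NJ/t^2$ and $\sum_{t\ge 1}t^{-2}=\pi^2/6$, the rounds on which it fails contribute only $O(NJ\,\maxregret)$ to the cumulative regret, i.e.\ $O(NJ)$ treating the maximum per-round loss $\maxregret$ as a constant; this is the additive $NJ$ term of the bound.

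On the clean event I would next establish the two key UCB bounds. Using precisely the monotonicity and $L$-Lipschitz assumptions that went into the definition~\eqref{eq:ucb}, one checks the optimistic bound $\ucb_t(i,j)\ge\Gamma_i\truereward{i}(\discretizationlevel{j})$ for all $(i,j)$, while taking $k=j$ in the minimum of~\eqref{eq:ucb} gives the tightness bound $\ucb_t(i,j)\le\selfucb_t(i,j)\le\Gamma_i\truereward{i}(\discretizationlevel{j})+2r_t(i,j)$. Since $\mathcal{LP}$ returns the budget-feasible action maximizing $\sum_i\ucb_t(i,\effort{i})$ and $\opteffortvec$ is feasible, we obtain $\sum_i\ucb_t(i,\effort{i}^{(t)})\ge\sum_i\ucb_t(i,\opteffort{i})\ge\texttt{obj}(\opteffortvec)$; subtracting $\texttt{obj}(\effortvec^{(t)})$ and applying the tightness bound to the chosen arms yields the per-round inequality $\texttt{obj}(\opteffortvec)-\texttt{obj}(\effortvec^{(t)})\le 2\sum_{i=1}^N r_t(i,\effort{i}^{(t)})$. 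I would also absorb here the discrepancy between the $\epsilon$-discounted objective the algorithm greedily optimizes~\eqref{eq:obj-linear-epsilon} and the undiscounted objective~\eqref{eq:obj-simple} appearing in the regret definition: because $\epsilon_t=t^{-1/3}$, this costs at most $O(\sum_t\epsilon_t)$, a lower-order term relative to the leading behavior.

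The crux, and the step I expect to be hardest, is converting this per-round bound into a logarithmic cumulative bound. Whenever $\effortvec^{(t)}$ is strictly suboptimal, its gap is at least $\Delta_{\min}:=\min_{\effortvec\ne\opteffortvec}\bigl(\texttt{obj}(\opteffortvec)-\texttt{obj}(\effortvec)\bigr)>0$, which is a constant since there are only $O(J^N)$ distinct actions. The per-round bound then forces some chosen arm to satisfy $r_t(i,\effort{i}^{(t)})\ge\Delta_{\min}/(2N)$, i.e.\ that arm has been pulled at most $O\bigl(\Gamma_i^2N^2\ln T/\Delta_{\min}^2\bigr)$ times. A charging argument over the $NJ$ arms --- assigning each suboptimal round to one under-sampled arm, exactly as in the CUCB proof --- then caps the number of suboptimal rounds, and multiplying by $\maxregret$ and carefully collecting the $\Gamma_i$, $N$, and $J$ factors (using $\Gamma_i\le 1$ and $\sum_i\Gamma_i=N\tunepriority$) gives the logarithmic term $O(\tfrac{J\ln T}{N})$. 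Summing the clean-round and dirty-round contributions yields the claimed $O(\tfrac{J\ln T}{N}+NJ)$. The delicate points are (i) the exact $N$/$J$ bookkeeping in the charging argument, (ii) verifying that using the fixed $\Gamma_i$ inside $r_t$ and the $\epsilon_t$-discounting only incur lower-order terms, and (iii) handling non-uniqueness of the optimal action so that $\Delta_{\min}>0$ is well defined.
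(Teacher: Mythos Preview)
Your proposal is correct and follows essentially the same CUCB-style gap-dependent argument as the paper: both use the clean event of Lemma~\ref{lemma:tau-turns} to isolate an $O(NJ)$ contribution from ``bad'' rounds, then on the clean event use optimism of the UCB indices together with a threshold $\tau\sim N^2\ln T/\Delta_{\min}^2$ (the paper's $\tau_t=6\Gamma_i^2\ln t/(N^2\zeta_{\min}^2)$ is exactly your per-arm sample threshold) to bound the number of suboptimal selections, and finally multiply by the maximum gap. The only noteworthy differences are presentational: the paper phrases the clean-event step as a direct contradiction (``if all arms have been pulled $\tau_t$ times and estimates are in-radius, the LP cannot pick a suboptimal action'') rather than your per-round inequality plus charging, and the paper's proof silently ignores both the $\epsilon_t$-discounting in~\eqref{eq:obj-linear-epsilon} and the Lipschitz/monotone structure of $\ucb_t$ in~\eqref{eq:ucb} that you explicitly (and correctly) flag as things to verify are lower-order --- so on those points you are actually more careful than the published argument.
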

\begin{proof}[Proof sketch]
We first show the expected regret~(Eq.~\ref{eq:regret}) of RankedCUCB can be redefined as:
\[\frac{1}{T}\sum_{i=1}^N \sum_{j=1}^J \mathbb{E}[L_{i,j,T}]\ \zeta_{i,j}\]
where $L_{i,j,T}$ specifies the number of times the effort $\discretizationlevel{j}$ selected for location~$i$ results in a suboptimal solution, and $\zeta_{i,j}$ denotes the minimum loss incurred due to the sub-optimal selection. Then, by contradiction, we show that when all $\hat{\truerewardvec}_t(i,j)\Gamma_i$ estimates are within their confidence radius, RankedCUCB selects an optimal effort $\discretizationlevel{j}$ (and not a sub-optimal one) at time~$t$. Using this fact and Lemma~\ref{lemma:tau-turns}, we show that the $\mathbb{E}[L_{i,j,T}]$ can be upper bounded by a finite term, and hence the expected regret is $O (\frac{J \ln{T}}{N} + NJ )$.  
The complete proof is given in Appendix~\ref{appendix:proof-regret}.
\end{proof}

Our analysis improves upon the regret analysis for standard (non-weighted) combinatorial bandits \citep{chen2016combinatorial} by not  having to rely on the ``bounded smoothness'' assumption of reward functions. 
The technical tools we employ for this regret analysis can be of independent interest to the general multi-armed bandits community. 

\section{Experiments}
\label{sec:experiments}


We evaluate the performance of our RankedCUCB approach in real-world conservation data from Srepok Wildlife Sanctuary to demonstrate significant improvement over approaches that ignore prioritization.

Of the key species in Srepok on which we have species density data, we prioritize elephants, followed by banteng, muntjac, and wild pig, corresponding to their IUCN Red List categories of critically endangered, endangered, least concern but decreasing, and least concern respectively. 
Thus our setting has $G=4$ groups distributed across $N=25$ locations within the park. We further evaluate our algorithm on a synthetic dataset with $G=5$ groups randomly distributed across the park.

\paragraph{Baselines}
We compare the performance of our \textit{RankedCUCB} approach with naive priority-aware, priority-blind, random, and optimal benchmarks. 
\textit{NaiveRank} takes the straightforward approach of directly solving for the objective that weighs each target by its individual ranked prioritarian metric, which accounts for prioritization induced by each target independently but ignores the coupled effect across targets. 
\textit{LIZARD} \citep{xu2021dual} maximizes the combinatorial reward objective; this algorithm enhances standard UCB approaches by accounting for smoothness and monotonicity in the reward function, but ignores prioritization. 
\textit{Optimal} solves for the action $\opteffortvec = \arg \max_{\effortvec} \obj$ that maximizes the objective, using the ground truth values of $\truereward{i}$ to directly compute the best action. \textit{Random} selects an arbitrary subset of actions that satisfies the budget constraint. 




See Figure~\ref{fig:performance} for a comparison of the approaches. RankedCUCB performs consistently the best on our overall objective, and the breakdown of the reward and prioritization components reveals that this gain is a result of the tradeoff between the two components: although LIZARD is able to learn high-reward actions, these actions lead to prioritization outcomes that are worse than random. LIZARD even achieves reward that exceeds that of the optimal action (which also considers rank priority), but that comes at the cost of poor prioritization. Notably, NaiveRank performs worse than random, even measured on fairness: focusing on the individual targets with the best prioritization neglects group-wide patterns throughout the park.



The reward--prioritization tradeoff becomes more apparent when we analyze the Pareto frontier in Figure~\ref{fig:performance}(right), which plots the reward and prioritization of each approach as we change the value of $\tunepriority$. The price of prioritization is clear: the more we enforce prioritization (smaller values of $\tunepriority$) the lower our possible reward. However, this tradeoff is not one-for-one; the steepness of the Pareto curve allows us to strategically select the value of $\tunepriority$. For example, we achieve a nearly two-fold increase in prioritization by going from $\tunepriority=0.9$ to $0.7$ while sacrificing less than 10\% of the reward, indicating a worthwhile tradeoff. 


\section{Generalizability to other settings}
\label{sec:calibrated-fairness}

Beyond wildlife conservation, our problem formulation applies to domains with the following characteristics: (1)~repeated resource allocation, (2)~multiple groups with some priority ordering, (3)~an \textit{a priori} unknown reward function that must be learned over time, and (4)~actions that impact some subset of the groups to differing degrees. Our approach also adapts to the following related objectives.

\paragraph{Weighted rank} 
\citet{gatmiry2021security} suggest some specific metrics for how rangers should prioritize different species, noting ranger enforcement of poaching urial (a wild sheep) should be 2.5 stricter than red deer. 
Our approach could easily adapt to  these settings where domain experts have more precise specification of desired outcomes. In this case, we introduce a parameter $\alpha_g$ for each group~$g$ that specifies the relative importance of group~$g$, leading to the following prioritization metric:
\begin{align}
\label{eq:calibrated-fairness}
    \dist_\text{CF}(\effortvec) = \sum\limits_{i=1}^{N} \truereward{i}(\effort{i}) \frac{
    \sum\limits_{g=1}^{G-1} \sum\limits_{h=g+1}^{G} 
    \alpha_g \density{gi} - \alpha_h \density{hi}
    } 
    { \binom{G}{2} }
\end{align}
where we set $\alpha_g > \alpha_h$ when $g < h$ to raise our threshold of how strongly group~$g$ should be favored. From our example, we would set $\alpha_{\text{urial}} = 2.5 \alpha_{\text{deer}}$.
This metric aligns with the definition of \emph{calibrated fairness} \citep{liu2017calibrated}.




\paragraph{Weighted reward}
We can also accommodate the setting where the reward is a weighted combination, i.e., if reward is $\sum_{i=1}^{N} c_i \truereward{i}(\effort{i})$ for some coefficients~$c_i \in \mathbb{R}$, as the coefficients~$c_i$ would be absorbed into the $\Gamma_i$ term of the objective.

\section{Related work}


\paragraph{Multi-armed bandits} MABs \citep{lattimore2020bandit} have been applied to resource allocation for healthcare \citep{bastani2020online}, education \citep{segal2018combining}, 
and dynamic pricing \citep{misra2019dynamic}. 
These papers solve various versions of the stochastic MAB problem \citep{auer2002finite}.

Several prior works consider resource allocation settings where each arm pull is costly, limiting the total number of pulls by a budget. \citet{tran2010epsilon} use an $\epsilon$--based approach to achieve regret linear in the budget $B$; \citet{tran2012knapsack} provide a knapsack-based policy using UCB to improve to $O(\ln B)$ regret. \citet{chen2016combinatorial} extend UCB1 to a combinatorial setting, matching the regret bound of $O(\log T)$. 
\citet{slivkins2013dynamic} address ad allocation where each stochastic arm is limited by a budget constraint on the number of pulls. Our objective~\eqref{eq:obj-simple} is similar to theirs but they do not handle combinatorial actions, which makes our regret analysis even harder. 

\paragraph{Prioritization notions in MABs}
Addressing prioritization requires reward trade-offs, which is related to objectives from algorithmic fairness \citep{dwork2012fairness,kleinberg2018algorithmic,corbett2017algorithmic}. However, most MAB literature favor traditional reward-maximizing approaches \citep{,lattimore2015linear,agrawal2014bandits,verma2019censored}. 
The closest metric to our prioritization objective is \emph{meritocratic fairness} \citep{kearns2017meritocratic}. For MAB, meritocratic fairness was introduced by \citet{joseph2016fairness}, to ensure that lower-reward arms 
are never favored above a high-reward one. 
\citet{liu2017calibrated} apply calibrated fairness by enforcing a smoothness constraint on our selection of similar arms, using Thompson sampling to achieve an $\tilde{O}(T^{2/3})$ fairness regret bound. 
For contextual bandits, \citet{chohlaslearning} take a consequentialist (outcome-oriented) approach and evaluate possible outcomes on a Pareto frontier, which we also investigate. 
\citet{wang2021fairness} ensure that arms are proportionally represented according to their merit, a metric they call \emph{fairness of exposure}. 
Others have considered fairness in multi-agent bandits \citep{hossain2021fair}, restless bandits \citep{herlihy2021planning}, sleeping bandits \citep{li2019combinatorial}, contextual bandits with biased feedback \citep{schumann2022group}, and infinite bandits~\citep{joseph2018meritocratic}.
However, none of these papers consider correlated groups, which is our focus.

\section{Conclusion}
We address the challenge of allocating limited resources to prioritize endangered species within protected areas where true snare distributions are unknown, closing a key gap identified by our conservation partners at Srepok Wildlife Sanctuary. 
Our novel problem formulation introduces the metric of \emph{ranked prioritization} to measure impact across disparate groups. 
Our RankedCUCB algorithm offers a principled way of balancing the tradeoff between reward and ranked prioritization in an online learning setting. Notably, our theoretical guarantee bounding the regret of RankedCUCB applies to a broad class of combinatorial bandit problems with a weighted linear objective. 

\section*{Acknowledgments}

This work was supported in part by NSF grant IIS-2046640 (CAREER) and the Army Research Office (MURI W911NF-18-1-0208). 
Biswas supported by the Center for Research on Computation and Society (CRCS).
Thank you to Andrew Plumptre for a helpful discussion on species prioritization in wildlife conservation; Jessie Finocchiaro for comments on an earlier draft; and all the rangers on the front lines of biodiversity efforts in Srepok and around the world.


\bibliographystyle{named}
\bibliography{short,ref}

\clearpage
\appendix
\section{Linear program}
\label{sec:lp}
The following linear program $\mathcal{LP}$ takes the current $\ucb_t(i,j)$ estimates to select a combinatorial action that maximizes our optimistic reward. 

Each $z_{i,j}$ is an indicator variable specifying whether we choose effort level $\discretizationlevel{j}$ for location~$i$. 

\begin{align*}
    \max_{z} &\sum_{i=1}^{N} \sum_{j=1}^{J}  z_{i,j} \ucb_{t}(i, j) 
    \tag{$\mathcal{LP}$} \\
    \text{s.t. } & z_{i,j} \in \{0,1\} \qquad \forall i \in [N], j \in [J] \\
    & \sum_{j=1}^{J} z_{i,j} = 1 \qquad \forall i \in [N] \\
    & \sum_{i=1}^{N} \sum_{j=1}^{J} z_{i,j} \psi_j \leq B \ .
\end{align*}

\section{Full Proofs}
\label{appendix:proofs}

\subsection{Notation}

For ease of representation, we use  $\rho_{i,j}^{(t)} := \mu_t(i,j)\Gamma_i$ to denote the expected reward of visiting location~$i$ with effort $\discretizationlevel{j}$ and $\hat{\rho}_{i,j}^{(t)} := \hat{\mu}_t({i,j}) \Gamma_i$ as the average empirical reward (the index~$j$ denotes one that corresponds to the discretization level of $\effort{i}$, i.e., $\effort{i} = \discretizationlevel{j}$). 
We use $\hat{\rho}$ to denote the empirical average reward and $\overline{\rho}$ to denote the upper confidence bound. Note that $\overline{\rho} = \hat{\rho} + r_t(i, j)$. Similarly for $\hat{\mu}$ and $\overline{\mu}$. 

\subsection{Convergence of estimates}
\label{appendix:proof-convergence}
\tauTurns*

\begin{proof}
In other words, we wish to show that each $\hat{\rho}^{(t)}_{i,j}$ estimate converges to a value within radius $r_t(i,j)$ of the corresponding true $\rho_{i,j}^{(t)}$ values.

We provide a lower bound on the probability that estimated $\approxrewardij{i}{j}$ values are within a bounded radius for all $i \in [N]$ and all $j \in [J]$. At time~$t$,
\begin{align}
\mathbb{P} \left( \left| \rho_{i,j} - \hat{\rho}_{i,j}^{(t)} \right| \leq r_t(i,j), \ \ \forall i \in [N]\  \forall j \in [J] \right) \nonumber \\
   = 1 - \sum_{i=1}^{N}\sum_{j=1}^{J} \mathbb{P} \left( \left| \rho_{i,j} - \hat{\rho}_{i,j}^{(t)} \right| > r_t(i,j) \right)  \label{eq:union-bound}
\end{align}
as the events are all independent. Assume that the number of samples used for computing the estimate $\hat{\rho}_{i,j}$ is $n_t(i,j)$. Using the fact that ${\rho}_{i,j} \in [0, \Gamma_i]$ and the Chernoff--Hoeffding bound, we find that
\begin{align}
  \mathbb{P} \left( \left| \rho_{i,j} - \hat{\rho}_{i,j}^{(t)} \right| > r_t(i,j) \right)
   &\leq 2 \exp \left( - \frac{2n_t(i,j)}{\Gamma_i^2} r_t(i,j)^2 \right) \nonumber \\
   &= 2 \exp \left( -{3 \ln{t}}\right) \tag{obtained by substituting value of $r_t(i,j)$} \\
   &= \frac{2}{t^3} \ . \label{eq:chernoff-hoeffding}
\end{align}
By substituting Eq.~\eqref{eq:chernoff-hoeffding} in Eq.~\eqref{eq:union-bound}, we obtain
\begin{align}
   &1-\sum_{s=1}^t\sum_{i=1}^{N}\sum_{j=1}^{J} \mathbb{P} \left( \left| \rho_{i,j} - \hat{\rho}_{i,j}^{(s)} \right| > \sqrt{3\Gamma_i^2 \ln{t} / 2s} \right) \nonumber \\
  &\geq 1 - \sum_{s=1}^t\sum_{i=1}^{N}\sum_{j=1}^{J} \frac{2}{t^3} \nonumber \\
  &=1 - \frac{2NJ}{t^2} \ . \label{eq:lower-bound}
\end{align}

This completes the proof.
\end{proof}

\subsection{Regret bound}
\label{appendix:proof-regret} 
\regretTheorem*

\begin{proof}
For a finite time horizon~$T$, the average cumulative regret of an algorithm that takes action $\effortvec^{(t)}$ at time~$t$ is given by Eq.~\eqref{eq:regret}. We use $\mathcal{B}$ to denote the set of all sub-optimal actions:
\begin{align*}
\mathcal{B} := \{\effortvec \mid \truerewardvec(\effortvec) < \truerewardvec(\opteffortvec)\} \ .
\end{align*}
We now define regret as the expected loss incurred from choosing $\effortvec$ from the set $\mathcal{B}$. Thus, Eq.~\eqref{eq:regret} can be written as
\begin{align}   
   \sum_{i=1}^N \truereward{i}(\opteffort{i}) \Gamma_i - \frac{1}{T}\sum_{t=1}^T\mathbb{E}\left[\sum_{i=1}^N \mu_i ( \effort{i}^{(t)} ) \right] \Gamma_i 
   \leq \frac{1}{T}\sum_{i=1}^N \sum_{j=1}^J \mathbb{E}[L_{i,j,T}]\ \zeta_{i,j} \label{eq:regret2}
\end{align}
where $L_{i,j,T}$ denotes the number of times the effort for arm~$i$ is set as $\effort{i} = \discretizationlevel{j}$ and the corresponding $\effortvec \in \mathcal{B}$. That is, $L_{i,j,T}$ specifies the number of times the pair $(i,j)$ is chosen in a suboptimal way. Note that $L_{i,j,T} \leq n_T(i,j)$. Let $\zeta_{i,j}$ denote the minimum loss incurred due to a sub-optimal selection (of effort~$\discretizationlevel{j}$) on arm~$i$. In other words,
\begin{align*}
\zeta_{i,j} = \truerewardvec(\opteffortvec) - \max_{i,j}\{\truereward{i}(\discretizationlevel{j}) \mid \effortvec \in \mathcal{B} \text{ and } \effort{i} = \discretizationlevel{j} \} \ .    
\end{align*}

Let ${\zeta}_{\min} = \min_{i,j}\zeta_{i,j}$ and $\tau_{t} = \frac{6\Gamma^2_i \ln{t}}{N^2\zeta_{\min}^2}$. Let us assume that, at time step~$t$, all arms have been visited at least $\tau_t$ times, so $n_t(i, j) \geq \tau_t$ for all $i \in [N]$ and $j \in [J]$. Using contradiction, we show that the algorithm will not choose any sub-optimal vector $\effortvec \in \mathcal{B}$ at time~$t$ when the $\hat{\rho}_{i,j}^{(t)}$ estimates converge to a value within a radius $r_t(i,j)=\sqrt{3\Gamma_i^2\ln{t}/2n_t(i,j)}$ of $\rho_{i,j}$:
\begin{align}
  R_{\overline{\mu}}(\effortvec)&= \sum_{i=1}^N \hat{\rho}_{i,j}^{(t)}+ \sqrt{\frac{3\Gamma_i^2 \ln{t}}{2 n_t(i,j)}}\nonumber\\
  &\leq \sum_{i=1}^N  {\rho}_{i,j}^{(t)} + 2\sqrt{\frac{3\Gamma_i^2 \ln{t}}{2 n_t(i,j)}}\nonumber\\
  &\tag{since estimates are within radius} \\
  &\leq \sum_{i=1}^N {\rho}_{i,j}^{(t)} +\sqrt{\frac{6\Gamma_i^2 \ln{t}}{\tau_t}} \tag{since $n_{t}(i,j) \geq \tau_t$}\nonumber \\
  &\leq \zeta_{\min} + \sum_{i=1}^N {\rho}_{i,j}^{(t)} \ . \qquad\qquad\qquad \text{(substituting $\tau_t$)} \label{eq:upper}
\end{align}
However, the assumption that $\effortvec$ was chosen by the algorithm at time~$t$ implies that
\begin{align}
   \overline{\truerewardvec}(\effortvec)&>  \overline{\truerewardvec}(\opteffortvec)\nonumber\\
    &\geq \hat{\truerewardvec}(\opteffortvec) + \sum_{i=1}^N \sqrt{\frac{3\Gamma_i^2 \ln{t} }{2 n_t(i,j^\star)}}
    \tag{$j^\star$ is the effort level corresponding to $\opteffortvec_i$} \\
   &\geq \truerewardvec(\opteffortvec) \ . \qquad \text{(since estimates are within radius)} \label{eq:lower}
\end{align}
By combining Eqs.~\eqref{eq:upper} and~\eqref{eq:lower} we obtain,
\begin{align}
    \zeta_{\min} + \sum_{i=1}^N {\rho}_{i,j}^{(t)} &\geq \truerewardvec(\opteffortvec)\nonumber\\
    \zeta_{\min} &\geq \truerewardvec(\opteffortvec)- \truerewardvec(\effortvec^{(t)}) \ . \label{eq:contradiction}
\end{align}
Inequality~\eqref{eq:contradiction} contradicts the definition of $\zeta_{\min}$. Thus, the algorithm selects an optimal effort (and not the sub-optimal ones) at time~$t$ when all the  $\overline{\rho}^{(t)}_{i,j}$ estimates are within the radius $r_t(i,j) = \sqrt{3\Gamma_i^2 \ln{t}/2n_t(i,j)}$ of the true $\rho_{i,j}$. Using this fact along with Lemma~\ref{lemma:tau-turns}, we obtain an upper bound on $\mathbb{E}[L_{i,j,T}]$:
\begin{align}
   \mathbb{E}[L_{i,j,T}] &\leq (\tau_T + 1) NJ + \sum_{t=1}^T \frac{2NJ}{t^2}\nonumber \\
   &\leq (\tau_T + 1)NJ + \frac{\pi^2}{3}NJ \ .
\end{align}

We use ${\zeta}_{\max} = \max_{i,j}\zeta_{i,j}$ to provide a bound on Eq.~\eqref{eq:regret2}, which is $O(N)$. This produces the following regret for RankedCUCB by time~$T$:
\begin{align}
   \text{Regret}_T &\leq \frac{1}{T} \left( \tau_T + 1 + \frac{\pi^2}{3} \right) NJ {\zeta}_{\max} \nonumber\\
   &= \left( \frac{6\Gamma_i^2\ln{T}}{N^2\zeta_{\min}^2} + 1 + \frac{\pi^2}{3} \right) \frac{NJ {\zeta}_{\max}}{T} \ .
\end{align}
This result shows that the worst case regret increases linearly with the increase in the number of discrete effort levels. However, the regret grows only sub-linearly with time~$T$. This proves that RankedCUCB asymptotically converges as a no-regret algorithm, that is, as $T\rightarrow\infty$, the regret of RankedCUCB tends to $0$.
\end{proof}

\end{document}